\documentclass{article}

\usepackage{microtype}
\usepackage{graphicx}
\usepackage{subcaption}
\usepackage{amsmath}
\usepackage{booktabs} 

\usepackage{hyperref}
\usepackage{xcolor}

\usepackage[accepted]{icml2026_weightsymmetry}

\usepackage{amsmath}
\usepackage{amssymb}
\usepackage{mathtools}
\usepackage{amsthm}
\usepackage{amssymb}
\usepackage{pifont}

\usepackage[capitalize,noabbrev]{cleveref}
\usepackage{dblfloatfix}

\newif\ifusesvg
\usesvgfalse    

\ifusesvg
  \usepackage{svg}
\fi

\newcommand{\myfigure}[2][]{%
  \ifusesvg
    \includesvg[#1]{#2.svg}%
  \else
    \includegraphics[#1]{#2.pdf}%
  \fi
}

\theoremstyle{plain}
\newtheorem{theorem}{Theorem}[section]

\theoremstyle{definition}

\theoremstyle{remark}

\usepackage{caption}

\usepackage[textsize=tiny]{todonotes}
\usepackage{enumitem}
\usepackage{subcaption}  

\icmltitlerunning{Objective-Specific Privileged Bases via Full-Prefix Matryoshka Learning}

\newcommand{\GL}{\mathrm{GL}}

\newcommand{\cM}{\mathcal{M}}
\newcommand{\cL}{\mathcal{L}}
\newcommand{\cG}{\mathcal{G}}
\newcommand{\RR}{\mathbb{R}}

\newcommand{\sgn}{\mathrm{sign}}

\newcommand{\LossAE}{\cL_{\mathrm{AE}}}
\newcommand{\LossMRL}{\cL_{\mathrm{MRL}}}

\newcommand{\LossFPMRL}{\cL_{\mathrm{FP\text{-}MRL}}}
\newcommand{\LossNUL}{\cL_{\mathrm{NU\text{-}\ell_2}}}
\newcommand{\LossMDL}{\cL_{\mathrm{MD\text{-}\ell_1}}}
\newcommand{\LossFisher}{\cL_{\mathrm{Fisher}}}
 \newcommand{\SBC}{S_{\mathrm{BC}}}
\newcommand{\SWC}{S_{\mathrm{WC}}}

\begin{document}

\twocolumn[
  \icmltitle{Objective-Specific Privileged Bases via Full-Prefix Matryoshka Learning}

\icmlsetsymbol{equal}{*}
  \begin{icmlauthorlist}
    \icmlauthor{Arghamitra Talukder}{columbia}
    \icmlauthor{Philippe Chlenski}{columbia}
    \icmlauthor{Itsik Pe'er}{columbia}
  \end{icmlauthorlist}

  \icmlaffiliation{columbia}{Computer Science, Columbia University}

  \icmlcorrespondingauthor{Arghamitra Talukder}{at3836@columbia.edu}

  \icmlkeywords{representation learning, symmetry breaking, Matryoshka,
    privileged basis, ordered representations, weight-space symmetry}
  \vskip 0.3in
]

\printAffiliationsAndNotice{}

\begin{abstract}
Learned representations are often invariant to rotational transformations, leaving individual dimensions non-identifiable and interchangeable. We study how Matryoshka Representation Learning (MRL) induces a task-aligned privileged basis
distinct from variance-based or regularizer-induced orderings. In the linear setting, we prove that full-prefix MRL recovers the
ordered principal directions, and can be computed
efficiently using shared statistics. Empirically, we demonstrate that MRL yields consistent per-dimension structure aligned with task signal, where coordinate magnitude reflects informativeness.

\end{abstract}

\section{Introduction}
\label{sec:intro}
Learned representations of high dimensional data are the backbone of modern machine
learning: once trained, lower dimensional embeddings can be reused
across retrieval, compression, and other tasks with
wildly different computational budgets.
A persistent obstacle is non-identifiability
where many embedding functions appear equally effective and the representational dimensions are not interpretable.
In particular, 
we focus on this occurring due to 
insensitivity of loss functions
to rotational transformations of 
the embedded data coordinates \citep{baldi1989neural}. 

Formally, this is orthogonal symmetry
of the $d$-dimensional embedding space
w.r.t. $\mathrm{SO}(d)$ transformations.
This symmetry needs to be broken in order to 
distinguish individual dimensions,
and to canonically order coordinate axes.
Moreover, we want the distinct dimensions to be meaningful in terms
of the training objective, so we extend the definition of a privileged
basis from \citet{elhage2023privileged} to an
\emph{objective-specific privileged basis}: a coordinate system whose
axes are not just unique and identifiable, but aligned with the
structure of the training objective. Axis ordering can arise from regularization or optimizer artifacts~\cite{elhage2023privileged}, and does not by itself constitute a privileged basis. Prescribing a mechanism for breaking the symmetry
in a meaningful way is thus a central question and the focus of this manuscript.

\textbf{Prior work.}
Prior methods answer this question in two structurally distinct ways. \emph{Nested prefix methods} 
--- nested dropout~\citep{rippel2014learning,bao2020regularized} and the 
Prefix-weighted reconstruction loss~\citep{oftadeh2020eliminating} optimize a weighted sum
of mean square error (MSE) losses across nested prefixes, recovering ordered
eigenvectors of the data covariance~$\Sigma$.
\emph{Regularizer-based methods} --- non-uniform $\ell_2$~\citep{kunin2019loss,bao2020regularized} 
and variance-ordering~\citep{zhan2026learning} impose ordering through monotone
penalty schedules on weight norms or latent variances, again
anchored to $\Sigma$. Both families have been studied primarily to optimize reconstruction loss, so the privileged
directions they recover are ultimately the PCA directions
of the data.

\textit{Matryoshka Representation Learning (MRL)}
\citep{kusupati2022matryoshka} reframes the prefix
reconstruction idea as a practical recipe for jointly training multiscale models with any differentiable loss $\mathcal{L}$,
and has since been widely adopted for efficiency-adaptive
deployment across retrieval, re-ranking, and vision-language
tasks \citep{xiao2026metaembed,cai2024matryoshka,hu2024matryoshka,verma2025matryoshka}.
Yet, despite this practical success, the structure of the privileged basis induced by MRL and its connection to PCA-aligned bases remains largely unexplored. 

To address this gap, we formalize, prove, and measure the objective-specificity of MRL's privileged
basis. Our contributions are:
 
\begin{enumerate}[label=\textbf{\arabic*.}, noitemsep, topsep=0pt, left=1em]
  \item \textbf{MRL's privileged basis is objective-specific.} We show that MRL's gradient weighting privileges dimensions in proportion to task utility, rather than reconstruction variance, and is structurally distinct from the sparsity patterns induced by regularizer-based methods.
  \item \textbf{Full-Prefix MRL recovers PCA for linear autoencoder at no additional asymptotic cost.} For linear autoencoder (LAE), the
  Full-Prefix MRL loss  recovers the ordered eigenvalue
  spectrum of the data covariance and exact eigenvectors under an
  orthonormality constraint at the same asymptotic cost as standard
reconstruction training, via shared matrix products. We extend the analysis to Fisher discriminant objective.
\end{enumerate}


\section{Theoretical Results}
\label{sec:theory}

\textbf{Setup.} A summary of all symbols used in this paper is provided in
Table~\ref{tab:notation}. Let $X \in \RR^{p \times n}$ be zero-centered data with empirical
covariance $\Sigma = XX^\top$, where $n$ is the number of samples
and $p$ the input dimension. An LAE has encoder $B \in \RR^{d \times p}$ and decoder $A \in \RR^{p \times d}$ with columns $a_1, \ldots, a_d$,
where $d \leq p$ is the latent dimension.
For a nesting set $\cM = \{m_1 < \cdots < m_K = d\} \subseteq [d]$, the latent code is $z = Bx \in \RR^d$, with $z_{1:m}$ denoting its first $m$ coordinates. 
We define the rank-1 contribution of dimension $k$ as $y_k = z_k a_k \in \RR^p$, so the $m$-th prefix reconstruction is $\hat{x}^{(m)} = \sum_{k=1}^m y_k$.

The standard \emph{LAE loss} is invariant under $(A,B)\mapsto(AT, T^{-1}B)$ for $T \in \mathrm{SO}(d)$ (orthogonal transformations), a subgroup of $\mathrm{GL}(d)$ (invertible linear transformations) \citep{baldi1989neural}. It is unordered and assigns equal weight to every dimension $k$; full expansion is given in~\ref{app:lae_expansion}.
\begin{equation}
  \label{eq:vanilla}
  \LossAE(\theta) = \Bigl\|x - \sum_{k=1}^d y_k\Bigr\|^2,
\end{equation}

\subsection{Objective-specific Privileged Basis}
\label{sec:Obj-spec}

\textbf{MRL loss.}
\citet{kusupati2022matryoshka} introduce Matryoshka Representation Learning
(MRL) loss that jointly trains representations across nested prefix
dimensions:
\begin{equation}
  \label{eq:mrl}
  \LossMRL(\theta) = \frac{1}{n}\sum_{s=1}^n
  \sum_{m \in \cM} \omega_m\,
  \cL\!\left(W^{(m)} z_{1:m}^{(s)};\, \tilde{y}_s\right),
\end{equation}
where $z_{1:m}^{(s)} \in \RR^m$ denotes the first $m$ coordinates of the
encoder output for sample $s$, $\omega_m \geq 0$ are per-scale loss weights
(we use $\omega_m = 1$ for all $m \in \cM$), and $W^{(m)} \in \RR^{L \times m}$ is a
linear head for scale $m$ that maps the prefix embedding to the
$L$-dimensional task output (e.g., $L = C$ for classification, $L = p$
for reconstruction), jointly trained with the encoder. We use the
weight-tied variant where, $W^{(m)} = W_{1:m}$ are the first $m$ columns of a
shared $W \in \RR^{L \times d}$. In practice, $m$ is chosen from a
geometrically spaced set, e.g. $\cM = \{2, 4, 8, \dots, 1024, 2048\}$, yielding
$O(\log d)$ nested prefixes; we refer to this spaced variant as
\emph{Sparse MRL (S-MRL)}.

\textbf{Full-prefix MRL loss for LAE.}
\begin{align}
  \label{eq:fullprefix_loss_struct}
  \LossFPMRL(\theta)
  =& \sum_{m=1}^d \omega_m \Bigl\|x - \sum_{k=1}^m y_k\Bigr\|^2\\
  =& \sum_{m=1}^d \omega_m\|x\|^2
  \nonumber
  - 2\sum_{k=1}^d w_k\,x^\top y_k\\
  &+ \mathbf{y}^\top(S_d^{\omega} \otimes I_p)\,\mathbf{y},
  \label{eq:expansion}
\end{align}
where $\omega_m > 0$ are per-prefix weights,
$w_k = \sum_{m=k}^d \omega_m$ is the cumulative weight on dimension $k$,
$\mathbf{y} = [y_1^\top,\ldots,y_d^\top]^\top \in \RR^{pd}$, and
$(S_d^{\omega})_{ij} = w_{\max(i,j)}$. Inducing a privileged basis requires  only that the cumulative
weights be strictly monotonic, $w_1 > w_2 > \cdots > w_d > 0$, which
holds for any positive choice of $\omega_m$. A milder schedule such as $\omega_m \propto 1/m$ or $\omega_m \propto 1/(d-m+1)$
flattens this ratio while preserving the monotonicity. We adopt $\omega_m = 1$ throughout
this work which recovers the standard form with $w_k = d-k+1$, where each dimension is
weighted by the number of prefixes it appears in. The full expansion is given in
\cref{app:FP-MRL_expansion}.



\textbf{Non-uniform regularization loss.} A natural approach to imposing dimension ordering is to penalize latent
dimensions with monotonically varying coefficients, forcing the model to
concentrate information in less-penalized coordinates. Two instantiations
of this idea bracket the design space.

Inspired by~\citet{kunin2019loss}, ~\citet{bao2020regularized} proposes a \emph{Non-uniform $\ell_2$
(NU-$\ell_2$)} regularization that penalizes the weights connected to each
dimension with strictly increasing $\ell_2$ coefficients $0 < \lambda_1 
\cdots < \lambda_d$. Heavier penalties on late dimensions force the LAE to explain
higher-variance directions of $X$ with early dimensions:
\begin{equation}
  \label{eq:nonuniform}
  \LossNUL(\theta) = \frac{1}{n}\|X - ABX\|_F^2
  + \|{\Lambda}^{1/2} B\|_F^2 + \|A {\Lambda}^{1/2}\|_F^2.
\end{equation}

We introduce as an exploratory baseline the \emph{Monotone-decay $\ell_1$
(MD-$\ell_1$)} loss, which instead penalizes the embedding mass of each
dimension with strictly increasing $\ell_1$ coefficients lighter on early
dimensions to encourage them to hold more information. To our knowledge, this specific monotone weighting
scheme has not been previously studied in this context:
\begin{equation}
  \LossMDL(\theta)
  = \frac{1}{n}\|X - ABX\|_F^2
  + \frac{\alpha}{n} \sum_{k=1}^{d} k \sum_{s=1}^{n} |b_k^\top x_s|,
\end{equation}
where $\alpha > 0$ controls the penalty strength and $b_k$ is the
$k$-th row of the encoder $B$. See appendix~\ref{app:NU-L2_expansion} and~\ref{app:MD-L1_expansion} for the full scalar derivation with
a concrete $p=5,\,d=3$ example.
\begin{figure*}[!t]
    \centering
    \myfigure[width=\textwidth]{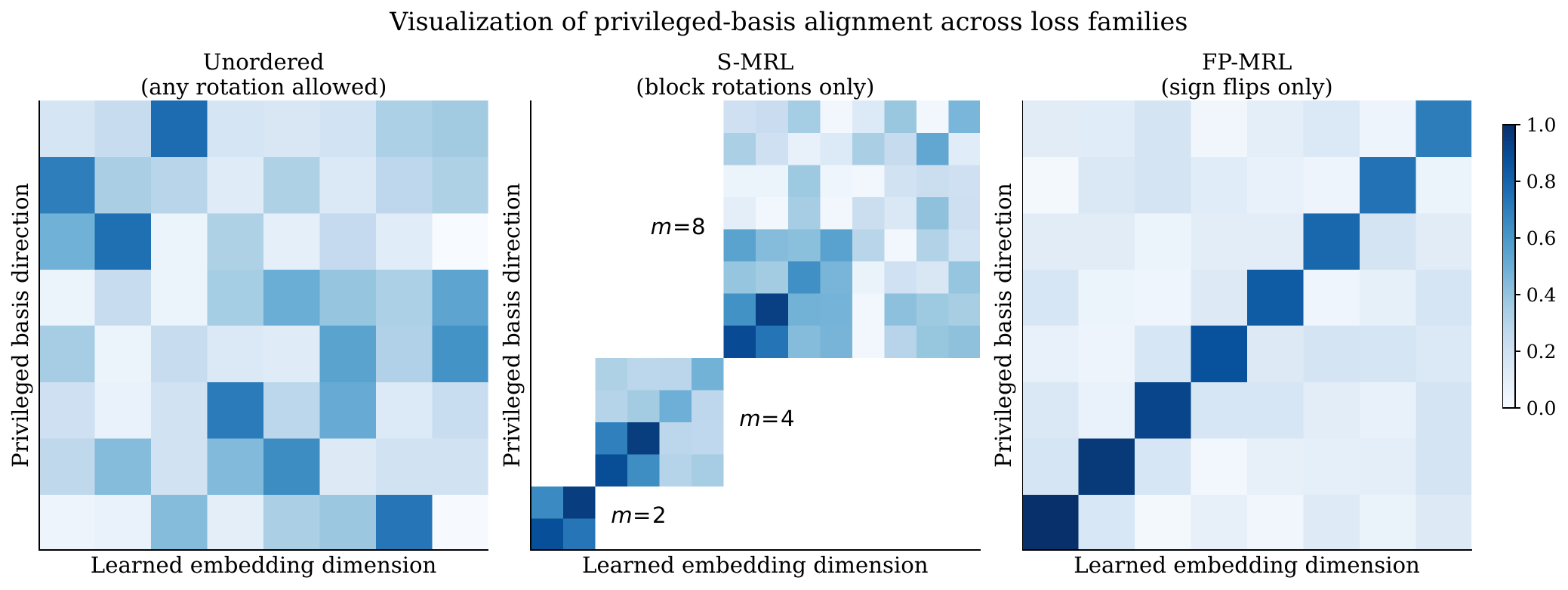}
    \caption{\textbf{Privileged-basis alignment visualization across loss families.}
Illustrative heatmaps showing the relationship between learned embedding
dimensions and a privileged reference basis under
different symmetry-breaking mechanisms. Unordered losses (left)
are invariant to arbitrary rotations of the latent basis, so any direction
can map to any dimension. S-MRL (middle) supervises only a
discrete set of prefix sizes ($m{=}2,4,8$), breaking the rotational
symmetry into block-diagonal structure with arbitrary rotations remaining
within each block. FP-MRL (right) supervises every prefix length,
collapsing the residual symmetry to sign flips and producing a
coordinate-wise identifiable basis.}
    \label{fig:heatmap}
\end{figure*}

Both losses share the same inductive bias: a monotone coefficient schedule
that makes early dimensions cheap and late dimensions costly, pushing
information toward low indices. However, the two penalties break rotational symmetry through
different mechanisms. Non-uniform $\ell_2$ breaks it through the
monotone weighting $\Lambda$: rotating the latent space mixes rows
of $B$ that carry different penalties, so any non-trivial rotation
strictly increases the regularizer. MD-$\ell_1$
breaks rotational symmetry at the level of the norm itself $\ell_1$ privileges the coordinate basis and the monotone
coefficient schedule additionally breaks permutation symmetry. Neither consults the task labels $y$ explicitly.

\textbf{Gradient comparison.}
The gradient of each loss with respect to $y_k$ reveals how it induces an ordering over the components of $y$: NU-$\ell_2$ scales by $\lambda_k(b_k \otimes a_k)$, ordering through
the weights alone; MD-$\ell_1$ by $k\cdot\sgn(b_k^\top x)\cdot x$,
ordering through the input. Neither uses the task signal. Full-prefix
MRL by $-2\sum_{m=k}^{d} r^{(m)}$, where
$r^{(m)} = x - \hat{x}^{(m)}$ is the prefix-$m$ residual, and the
ordering simultaneously acts on the gradient toward $x$
\emph{(ordered linear weights)}, the
self-penalty on $\|y_k\|^2$ \emph{(ordered quadratic weights)}, and the
coupling between dimensions (\emph{ordered cross-dim.\ coupling)} --- all
three shaped by the task loss through the learned encoder-decoder, see equation~\eqref{eq:expanded_full}. Unlike regularizer-based losses, which impose ordering on a single object (weights or activations), FP-MRL propagates task signal through all three
channels at once. NU-$\ell_2$ and MD-$\ell_1$ break symmetry by penalizing embedding geometry, tying privileged directions to data statistics or embedding mass. In contrast, FP-MRL accumulates task-gradient signal, inducing a privileged basis that is specific to the training objective--- the same architecture trained with a different
loss $\cL$ learns a different basis. 

Figure~\ref{fig:heatmap} provides an intuitive
visualization of how different symmetry-breaking mechanisms constrain the
learned latent basis. For unordered objectives, rotational symmetry remains
fully preserved, so privileged directions need not align with any coordinate
axis and the learned representation is effectively unidentifiable up to
arbitrary rotations. Sparse-prefix MRL (S-MRL) partially breaks this symmetry
by supervising only a discrete set of prefix lengths, producing block-wise
structure while still permitting arbitrary rotations within each block.
In contrast, FP-MRL supervises every prefix length, progressively constraining
the latent space until only sign ambiguities remain, yielding a
coordinate-wise identifiable privileged basis.



\subsection{PCA Recovery}
\label{sec:structure}

The ordered weighting of~\cref{eq:expansion} forces a nested chain of
principal subspaces. By the Eckart--Young--Mirsky Theorem, each prefix loss
$\|X - A_{1:m}B_{1:m}X\|_F^2$ is minimized iff
$\mathrm{colspan}(A_{1:m}) = \mathrm{span}(u_1,\ldots,u_m)$.
Standard LAE training enforces this only at the full width $m=d$,
which pins the column span of $A$ to the top-$d$ principal subspace
but leaves the choice of basis within that subspace free; FP-MRL imposes it at every
$m \in [d]$ simultaneously, and the unique family of subspaces
satisfying all $d$ constraints is the nested chain
$\mathrm{span}(u_1) \subset \cdots \subset
\mathrm{span}(u_1,\ldots,u_d)$. Equivalently, eigenvector $u_k$
contributes $\sigma_k^2$ to the bound of every prefix $m < k$, so the
summed loss weights $\sigma_k^2$ by $(k-1)$, and only the ordered chain
achieves the lower bound
$\sum_{k=2}^{d}(k-1)\sigma_k^2 + d\sum_{k=d+1}^{p}\sigma_k^2$. The resulting global optimum $(A^*, B^*)$ satisfies
$\mathrm{colspan}(A^*_{1:m}) = \mathrm{span}(u_1, \ldots, u_m)$ for all
$m \in [d]$, determined up to right-multiplication by $T \in \mathrm{SO}(m)$
reflecting the column-space invariance of MSE
\citep{baldi1989neural}. Adding $A^\top A = I_d$ restricts this gauge
to orthogonal transformations, and distinct eigenvalues then eliminate
the remaining rotational freedom at every scale, yielding the full PCA
recovery $A^* = [u_1, \ldots, u_d]$ \citep{rippel2014learning}.



\textbf{LDA recovery.} Appendix~\ref{app:fisher_loss} shows that the FP-MRL construction extends naturally beyond MSE reconstruction objectives to supervised discriminative objectives based on Fisher's criterion. In this setting, the encoder is trained to maximize between-class variance relative to within-class variance in the learned embedding, yielding the classical Linear Discriminant Analysis (LDA) directions as the optimal discriminative subspace. The role of the Eckart--Young--Mirsky theorem in the PCA setting is played here by the Rayleigh--Ritz characterization of the Fisher trace-ratio objective. Specifically, the Fisher objective is optimized by the generalized eigenvectors, where the corresponding eigenvalues measure class-discriminative strength through the Fisher ratio. Just as the MSE objective recovers PCA directions ordered by explained variance, the Fisher objective recovers LDA directions ordered by discriminative power. Thus, the privileged basis induced by FP-MRL adapts to the underlying training objective itself. We confirm the emergence of these ordered LDA directions empirically in~\cref{sec:experiments}. Unlike the MSE reconstruction objective, however, $\cL_{\mathrm{Fisher}}$ in equation~\eqref{eq:fisher} is non-convex. Consequently, optimization is substantially less stable in practice, exhibiting slower convergence. 

\begin{figure*}[t]
    \centering
    \myfigure[width=\textwidth]{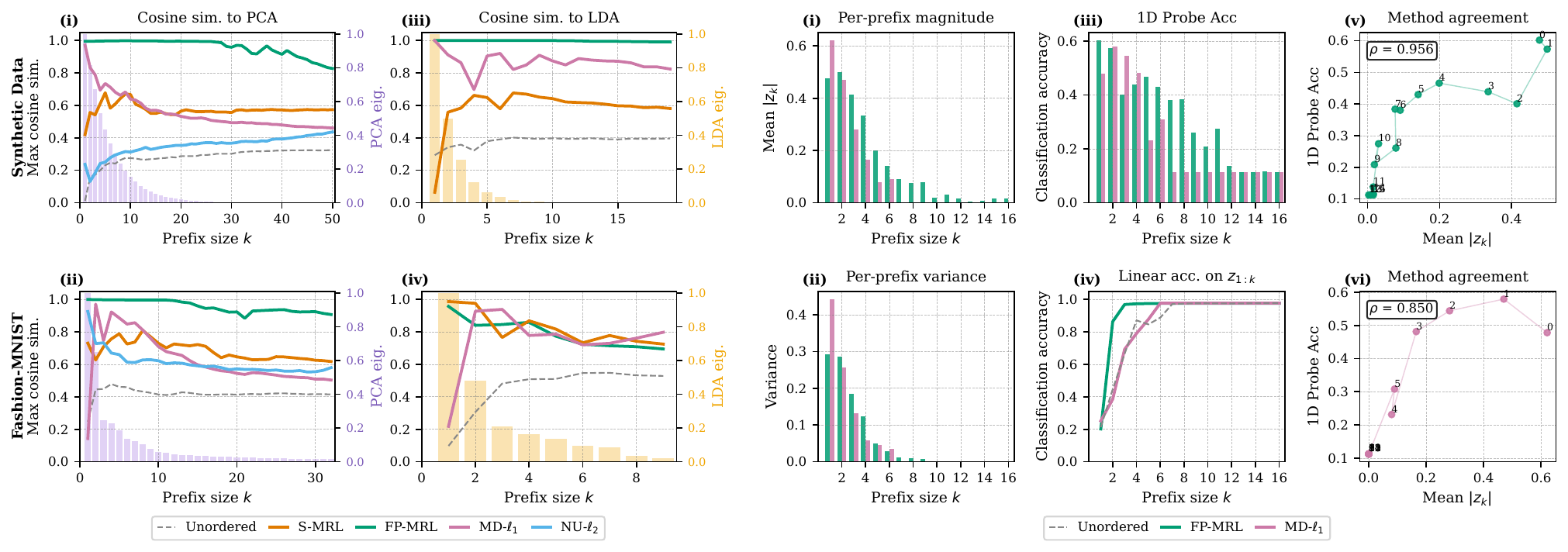}
    \begin{minipage}[t]{0.49\textwidth}
        \captionof{figure}{\textbf{Alignment with PCA and LDA bases under
        different symmetry-breaking mechanisms.}
        Maximum cosine similarity between learned prefix directions and
        the ordered PCA (i,~ii) and LDA (iii,~iv) bases, on synthetic data
        (top) and Fashion-MNIST (bottom). Bars show the normalized PCA/LDA eigenvalue
        spectra.}
        \label{fig:pcalda_alignment}
    \end{minipage}%
    \hfill
    \begin{minipage}[t]{0.49\textwidth}
        \captionof{figure}{\textbf{Per-prefix geometry and utility
        on MNIST classification. }Comparing FP-MRL and MD-$\ell_1$.
        (i,~ii) Mean $|z_k|$ and variance per coordinate.
        (iii) Top-1 probe accuracy of each coordinate in isolation.
        (iv) Cumulative top-1 accuracy on the prefix $z_{1:k}$.
        (v,~vi) Rank agreement between mean $|z_k|$ and 1D probe accuracy.}
        \label{fig:clf_geometry}
    \end{minipage}
\end{figure*}

\textbf{Efficient loss calculation.}
Naively, full-prefix MRL requires $d$ reconstruction losses, one per
prefix suggesting a $d$-fold cost increase over standard training.
In the linear setting, however, all prefix losses share the same
sufficient statistics and can be assembled from $d \times d$ matrices
computed once from $X$, $A$, and $B$. Full-prefix evaluation therefore
contributes only an $O(d^2)$ additive overhead on top of the $O(npd)$
matrix multiplications already required for standard reconstruction,
with no multiplicative blow-up. We note that for PCA itself a direct
SVD remains the most efficient route. This result is universally applicable across linear
models: even when no closed-form analytical solution exists (unlike
PCA or LDA), the same sufficient-statistic structure ensures that
full-prefix supervision incurs only additive overhead. A complete proof
and derivation of the computational complexity is given in
appendix~\ref{app:pca-proof}.

\textbf{Beyond MSE.}
The structural basis for the gradient ordering argument above and the efficient loss evaluation result in appendix~\ref{app:pca-proof} requires only that the model's prefix-$m$ output factorizes as a sum of per-coordinate contributions: besides a rank-1 contribution to reconstruction, this could be a vector contribution to logits or a scalar contribution to a pairwise distance.
The same gradient accumulation and efficient computation arguments extend naturally to classification, regression, and distance-based losses.
We leave the unified treatment for future work.

\section{Empirical Validation}
\label{sec:experiments}

 We evaluate PCA and LDA recovery on synthetic data, and on Fashion-MNIST in \cref{fig:pcalda_alignment}. \cref{fig:clf_geometry} evaluates per-dimension statistics and probe accuracy on MNIST. The model architecture, synthetic data generation, and evaluation metric are detailed in appendix~\cref{app:exp_setup}, with hyperparameters summarized in \cref{tab:arch_dims}.


\textbf{Alignment with PCA and LDA.}
\cref{fig:pcalda_alignment}(i--ii) shows the maximum cosine similarity between learned prefix directions and PCA eigenvectors.
FP-MRL maintains near-unit alignment across most prefix sizes on both synthetic and Fashion-MNIST data, with slight degradation at larger $k$. This drop occurs as later eigenvalues become small and closely spaced: when $\sigma_k^2 \approx \sigma_{k+1}^2$, multiple directions explain nearly identical variance making alignment ill-conditioned. In contrast, S-MRL, NU-$\ell_2$, and MD-$\ell_1$ either degrade with prefix size or fail to align consistently.

When trained with a  Fisher discriminant objective (appendix~\ref{app:fisher_loss}), we observe the same qualitative behavior for LDA directions~\cref{fig:pcalda_alignment}(iii--iv).
On synthetic data, FP-MRL achieves near-perfect alignment across prefix sizes, mirroring the PCA case. On real data, alignment degrades for all methods.
We attribute this to optimization rather than objective mismatch: the simple linear architecture described in appendix~\cref{app:arch_dims} used here does not reliably reach the global optimum of the LDA objective, leading to imperfect recovery of discriminative directions (~\cref{fig:losses}).
This gap suggests an important direction for future work—whether MRL requires more expressive architectures or optimization schemes to fully realize its theoretical guarantees under supervised objectives. Regularizer-based methods show weaker and less stable alignment throughout, reinforcing that their induced bases are not driven by task signal.


\textbf{Objective Specificity.}
To study whether MRL's induced ordering reflects the training objective itself rather than merely imposing sparsity, we move beyond the linear autoencoder setting to a nonlinear classification task on MNIST. We train a shared nonlinear encoder-classifier architecture on MNIST using either FP-MRL or MD-$\ell_1$ as the ordering objective. The model consists of a two-hidden-layer ReLU MLP encoder with batch normalization and dropout, followed by a simple linear classifier. We then evaluate whether the resulting coordinate ordering aligns with class-discriminative usefulness. Concretely, for each latent coordinate $z_k$, we measure both its magnitude statistics and its ability to linearly separate classes in isolation via a 1D logistic probe. We additionally evaluate prefix-wise linear classification accuracy using only the first $k$ coordinates. Full architectural, training, and evaluation details are provided in appendix~\cref{app:arch_dims}.

\cref{fig:clf_geometry} isolates how each training objective shapes the geometry of the
learned code along the prefix axis~$k$. Panels (i)--(ii) show that both
FP-MRL and MD-$\ell_1$ produce the
expected monotonic ordering of
the per-dimension magnitude $\mathbb{E}\!\left[|z_k|\right]$ with~$k$, concentrating
energy in the leading coordinates. The two objectives, however, induce
different decay profiles---MD-$\ell_1$ collapses mass more aggressively
into the first $2$--$3$ dimensions, whereas FP-MRL spreads usable signal
out to roughly $k\!\approx\!8$ before the tail goes inert. Subplot~(iii) shows that this geometric ordering is functionally
meaningful. The 1D probe accuracy measures how well a single coordinate $z_k$, used in
isolation, can linearly separate the classes, a direct test of whether
magnitude actually reflects per-dimension informativeness. MD-$\ell_1$ allocates the largest magnitude and variance to $k\!=\!1$, yet FP-MRL achieves the higher 1D probe accuracy there,
ties at $k\!=\!2$, and is only overtaken at $k\!=\!3$. Past $k\!=\!3$, MD-$\ell_1$'s probe accuracy decays in step with its
magnitude, whereas FP-MRL retains measurable signal across the remaining
coordinates. Subplot~(iv) reinforces this
pattern at the cumulative level: the linear accuracy on the prefix
$z_{1:k}$ rises sharply for FP-MRL and saturates near the
full-dimensional ceiling by $k\!\approx\!3$, while MD-$\ell_1$ requires
roughly twice as many coordinates ($k\!\approx\!6$) to reach the same
plateau. In deployment terms, this means FP-MRL can be truncated to half the
embedding width of MD-$\ell_1$ at no accuracy cost, a practical consequence of aligning magnitude
with informativeness. The MD-$\ell_1$ penalty
inflates the leading coordinate's scale as a byproduct of its sparsity
geometry, whereas FP-MRL's
nested-prefix loss directly rewards each coordinate for being useful in
isolation, so its magnitude ordering coincides with its information
ordering. Panels~(v)--(vi) confirm
this at the population level: the rank correlation between
$\mathbb{E}\!\left[|z_k|\right]$ and 1D probe accuracy is
$\rho\!=\!0.956$ for FP-MRL but only $\rho\!=\!0.850$ for MD-$\ell_1$,
quantifying how much more faithfully magnitude serves as a proxy for
informativeness under the nested-prefix objective. Together, these panels show that FP-MRL does not merely produce an ordered
representation but an ordered representation whose magnitude axis is
itself the information axis---an alignment that MD-$\ell_1$'s sparsity
prior approximates only loosely, and that constitutes the defining
specificity of the MRL objective.


\section{Conclusion}
\label{sec:conclusion}

We show that MRL induces an \emph{objective-specific} privileged basis, ordering dimensions via cumulative task gradients rather than data statistics or embedding mass. For LAE, FP-MRL recovers ordered PCA directions at no extra asymptotic cost; with nonlinear encoders, this structure incurs an $O(d)$ per-step overhead. Important future directions include characterizing whether such privileged bases persist in more complex nonlinear architectures, and alternative objectives such as contrastive or generative learning. Another important direction is to explore whether specific architectures are better suited to fully extract and utilize this task-aligned ordering. While these embeddings are optimal for the training objective by construction, natural next steps include understanding how they generalize to unseen data and downstream tasks, and whether regularizer-based approaches can recover comparable objective-specific ordering.



\bibliography{example_paper}
\bibliographystyle{icml2026}

\newpage
\appendix

\section{Loss Expansions and Gradients}
\label{app:expansion}


\textbf{Setup. }We expand each loss for a common worked example with input dimension
$p = 5$ and latent dimension $d = 3$. Let $x \in \RR^p$ be a single input. The encoder $B \in \RR^{d \times p}$
has rows $b_1, b_2, b_3$, and the decoder $A \in \RR^{p \times d}$ has
columns $a_1, a_2, a_3$. The rank-one contribution of dimension $k$
to the reconstruction is
\[
  y_k \;=\; a_k\, b_k^\top x \;\in\; \RR^p,
\]
that is, the latent activation $b_k^\top x \in \RR$ scaled by the
decoder column $a_k$. Stacking these contributions gives the
prefix-$m$ reconstruction and its residual:
\[
  \hat{x}^{(m)} \;=\; \sum_{k=1}^{m} y_k,
  \qquad
  r^{(m)} \;=\; x - \hat{x}^{(m)}.
\]
Each $\hat{x}^{(m)}$ is the model's reconstruction using only the
first $m$ latent dimensions, and $r^{(m)}$ measures what those $m$
dimensions fail to capture. The full reconstruction is
$\hat{x}^{(d)} = \sum_{k=1}^{d} y_k$ and the trailing residual is
$r^{(d)}$.
 
\subsection{LAE Loss}
\label{app:lae_expansion}

\paragraph{Definition.}
The standard linear autoencoder loss evaluates MSE at the full
reconstruction only:
\begin{equation}
  \LossAE(\theta)
  = \Bigl\|x - \sum_{k=1}^{d} y_k\Bigr\|^2
  = \|x - (y_1 + y_2 + y_3)\|^2.
\end{equation}
 
\paragraph{Expansion ($p=5,\,d=3$).}
\begin{align}
  \LossAE(\theta)
  &= \|x\|^2 - 2x^\top(y_1{+}y_2{+}y_3) \notag\\
  &\quad + \|y_1\|^2 + \|y_2\|^2 + \|y_3\|^2 \notag\\
  &\quad + 2y_1^\top y_2 + 2y_1^\top y_3 + 2y_2^\top y_3.
\label{eq:lae_expansion}
\end{align}
Every $y_k$ receives linear weight $1$ toward $x$ and every cross-term
$y_i^\top y_j$ has coefficient $2$, corresponding to weight matrix
$\mathbf{1}\mathbf{1}^\top \otimes I_p$.
 
 
\subsection{Full-Prefix MRL Loss}
\label{app:FP-MRL_expansion}
 
\paragraph{Definition.}
The FP-MRL loss sums MSE at every prefix reconstruction under $\omega_m = 1$:
\begin{align}
  \LossFPMRL(\theta)
  &= \underbrace{\|x - y_1\|^2}_{m=1}
   + \underbrace{\|x - (y_1{+}y_2)\|^2}_{m=2} \notag\\
  &\quad + \underbrace{\|x - (y_1{+}y_2{+}y_3)\|^2}_{m=3}.
\end{align}
 
\paragraph{Expanding each term.}
\begin{align}
  \|x - y_1\|^2
    &= \|x\|^2 - 2x^\top y_1 + \|y_1\|^2, \\
  \|x - (y_1{+}y_2)\|^2
    &= \|x\|^2 - 2x^\top(y_1{+}y_2) \notag\\
    &\quad + \|y_1\|^2 + 2y_1^\top y_2 + \|y_2\|^2, \\
  \|x{-}(y_1{+}y_2{+}y_3)\|^2
    &= \|x\|^2 - 2x^\top(y_1{+}y_2{+}y_3) \notag\\
    &\quad + \|y_1\|^2 + \|y_2\|^2 + \|y_3\|^2 \notag\\
    &\quad + 2y_1^\top y_2 + 2y_1^\top y_3 + 2y_2^\top y_3.
\end{align}
 
\paragraph{Summing and collecting.}
\begin{align}
  \LossFPMRL(\theta)
  &= 3\|x\|^2 \notag\\
  &\quad - \underbrace{2(3\,x^\top y_1 + 2\,x^\top y_2 + x^\top y_3)}_{\text{(1) ordered linear weights}} \notag\\
  &\quad + \underbrace{3\|y_1\|^2 + 2\|y_2\|^2 + \|y_3\|^2}_{\text{(2) ordered quadratic weights}} \notag\\
  &\quad + \underbrace{4\,y_1^\top y_2 + 2\,y_1^\top y_3 + 2\,y_2^\top y_3}_{\text{(3) ordered cross-dim.\ coupling}}.
  \label{eq:expanded_full}
\end{align}


The quadratic part is governed by
$S_3 \otimes I_5$, where 
\begin{equation}
  S_3 = \begin{pmatrix} 3 & 2 & 1 \\ 2 & 2 & 1 \\ 1 & 1 & 1
  \end{pmatrix},
  \quad (S_3)_{ij} = d - \max(i,j) + 1.
\end{equation}
 $(S_d)_{ij}$ counts the number of prefixes containing both
dimensions $i$ and $j$: a prefix of size $m$ contains $y_k$ iff
$k \le m$, so the count is $\sum_{m=\max(i,j)}^d 1 = d - \max(i,j) + 1$.
 
\subsection{Non-Uniform \texorpdfstring{$\ell_2$}{l2} Regularization}
\label{app:NU-L2_expansion}

\paragraph{Definition}
\begin{equation}
  \LossNUL(\theta)
  = \frac{1}{n}\|X - ABX\|_F^2
  + \|\Lambda^{1/2} B\|_F^2
  + \|A \Lambda^{1/2}\|_F^2,
\end{equation}
with $\Lambda = \mathrm{diag}(\lambda_1,\ldots,\lambda_d)$
\citep{kunin2019loss,bao2020regularized}.  Let
$\sigma_1^2 > \cdots > \sigma_d^2 > 0$ denote the top-$d$ eigenvalues of
the data covariance $\tfrac{1}{n}XX^{\top}$, with corresponding PCA
eigenvectors $u_1,\ldots,u_d$.

For the recovery guarantee to apply, the diagonal entries of $\Lambda$
must satisfy
\begin{equation}
  \label{eq:bao_constraints}
  0 \;<\; \lambda_1 \;<\; \lambda_2 \;<\; \cdots \;<\; \lambda_d
  \;<\; \sigma_d^{\,2}.
\end{equation}

Strict ordering ($\lambda_1 < \cdots < \lambda_d$):
    breaks the orthogonal symmetry of the uniform
    $\ell_2$ loss down to sign
    flips, making the set of global minima a discrete set of axis-aligned
    solutions rather than a continuous rotation orbit. Upper bound ($\lambda_d < \sigma_d^{\,2}$):
    ensures the shrinkage factors
    $(I - \Lambda S^{-2})^{1/2}$ in Theorem~2 in~\citet{bao2020regularized} are real and positive,
    so every latent dimension $k$ receives a non-trivial contribution
    from its assigned principal component.
 
\paragraph{Expansion ($p=5,\,d=3$).}
Writing $b_k \in \RR^p$ for the $k$-th row of $B$ and
$a_k \in \RR^p$ for the $k$-th column of $A$:
\begin{align}
  \LossNUL(\theta)
  &= \frac{1}{n}\|X - ABX\|_F^2 \notag\\
  &\quad + \lambda_1\|b_1\|^2 + \lambda_2\|b_2\|^2
           + \lambda_3\|b_3\|^2 \notag\\
  &\quad + \lambda_1\|a_1\|^2 + \lambda_2\|a_2\|^2
           + \lambda_3\|a_3\|^2.
\end{align}
The reconstruction term is identical across all dimensions; ordering
comes entirely from the scalar multipliers $\lambda_k$ on weight norms.

\subsection{Monotone-Decay \texorpdfstring{$\ell_1$}{l1} Loss}
 \label{app:MD-L1_expansion}
 
\paragraph{Definition.} We penalize the activation of each latent dimension $z_k = b_k^\top x$
with strictly increasing $\ell_1$ coefficients --- heavier on late
dimensions to discourage them from being used with $\alpha > 0$:
\begin{equation}
  \LossMDL(\theta)
  = \frac{1}{n}\|X - ABX\|_F^2
  + \frac{\alpha}{n} \sum_{k=1}^{d} k \sum_{s=1}^{n} |b_k^\top x_s|,
\end{equation}

\paragraph{Expansion ($p=5,\,d=3$).}
\begin{align}
  \LossMDL(\theta)
  &= \frac{1}{n}\|X - ABX\|_F^2 \notag \\
  &\quad + \frac{\alpha}{n}\Bigl(
      1\cdot{\textstyle\sum_s}|b_1^\top x_s|
    + 2\cdot{\textstyle\sum_s}|b_2^\top x_s| \notag \\
  &\hphantom{\quad + \frac{\alpha}{n}\Bigl(}
    + 3\cdot{\textstyle\sum_s}|b_3^\top x_s|
    \Bigr).
\end{align}
The penalty on $z_k$ grows with $k$, pushing the model to concentrate
information in early dimensions.
 
\subsection{Fisher discriminant objective}
\label{app:fisher_loss}

\paragraph{Definition.}
We are given data $\{(x_s, \tilde y_s)\}_{s=1}^{n}$ with
$x_s \in \RR^p$ and class labels $\tilde y_s \in \{1, \ldots, c\}$.
Let $n_c$ denote the number of samples in class $c$. We define the
class means $\mu_c$, global mean $\mu$, and
between- and within-class scatter matrices $\SBC$ and $\SWC$:
\begin{align}
  \mu_c &= \frac{1}{n_c} \sum_{s:\,\tilde y_s = c} x_s, \\
  \mu &= \frac{1}{n} \sum_{s=1}^{n} x_s, \\
  \SBC &= \sum_c \frac{n_c}{n} (\mu_c - \mu)(\mu_c - \mu)^\top, \\
  \SWC &= \sum_c \frac{1}{n} \sum_{s:\,\tilde y_s = c}
    (x_s - \mu_c)(x_s - \mu_c)^\top.
\end{align}
We seek directions $v \in \RR^p$ along which classes are maximally
separated relative to their internal spread, i.e.\ that maximize the
Fisher ratio
\begin{equation}
  J(v) \;=\; \frac{v^\top \SBC\, v}{v^\top \SWC\, v}.
\end{equation}
We assume $\SWC \succ 0$. Setting $\nabla_v J = 0$, we find that the stationary points
of $J$ are exactly the solutions of the generalized eigenproblem
\begin{equation}
  \SBC\, v_k \;=\; \gamma_k\, \SWC\, v_k,
  \qquad
  \gamma_1 \geq \gamma_2 \geq \cdots \geq \gamma_p \geq 0,
\end{equation}
with $J(v_k) = \gamma_k$. The leading direction $v_1$ maximizes the
Fisher ratio; the top-$d$ directions $v_1, \ldots, v_d$ jointly
maximize the trace-ratio objective
$\mathrm{Tr}\!\left[(V^\top \SWC V)^{-1} V^\top \SBC V\right]$ over
$V \in \RR^{p \times d}$, with optimal value
$\sum_{k=1}^{d} \gamma_k$. 
We note that, since $\SBC$ is a sum of $c$ rank-one terms whose
deviations $\mu_c - \mu$ sum to zero (weighted by $n_c/n$), we have
$\mathrm{rank}(\SBC) \leq c - 1$, so at most $c - 1$ of the
generalized eigenvalues $\gamma_k$ are nonzero. LDA thus yields
at most $c-1$ informative directions.

\paragraph{Training.}
We train the encoder $B \in \RR^{d \times p}$ using a supervised Fisher objective.
Let $z_s = B x_s \in \RR^d$ be the embedding of input $x_s$.
Define between-class and within-class scatter matrices in input space:
\begin{align}
  \SBC^{(x)} &= \sum_c \frac{n_c}{n}(\mu_c - \mu)(\mu_c - \mu)^\top, \\
  \SWC^{(x)} &= \sum_c \frac{1}{n}\sum_{s:\,\tilde{y}_s = c}(x_s - \mu_c)(x_s - \mu_c)^\top,
\end{align}
where $\mu_c$ and $\mu$ are class and global means. The corresponding scatter matrices in embedding space are
\begin{equation}
  \SBC^{(z)} = B \SBC^{(x)} B^\top, \qquad
  \SWC^{(z)} = B \SWC^{(x)} B^\top.
\end{equation}

We optimize the Fisher objective:
\begin{equation}
  \cL_{\mathrm{Fisher}}(B)
  = -\,\mathrm{Tr}\!\left[
  \left(\SWC^{(z)} + \varepsilon I\right)^{-1} \SBC^{(z)}
  \right],
  \label{eq:fisher}
\end{equation}
with $\varepsilon > 0$ for numerical stability. This objective maximizes class separation in the learned embedding by enlarging between-class variance relative to within-class variance.
At optimum, the row space of $B$ spans the top-$d$ LDA directions.

\paragraph{Analogy with PCA and FP-MRL.} The role of the Eckart--Young--Mirsky theorem in the PCA case is
played here by the Rayleigh--Ritz characterization of the trace-ratio
problem: each prefix loss is minimized iff
$\mathrm{rowspan}(B_{1:m,:})$ equals the span of the top-$m$ LDA
directions $v_1, \ldots, v_m$, defined as the leading solutions of
$\SBC^{(x)} v = \gamma\, \SWC^{(x)} v$ and ordered by descending
Fisher ratio $\gamma$. Summing over $m$ forces the nested chain
$\mathrm{span}(v_1) \subset \cdots \subset
\mathrm{span}(v_1, \ldots, v_d)$, in parallel to the PCA recovery of
\cref{thm:pca}, but with the principal directions $u_k$ of $XX^\top$
replaced by the LDA directions $v_k$ of the pair
$(\SBC^{(x)}, \SWC^{(x)})$. Unlike the MSE loss, $\cL_{\mathrm{Fisher}}$ is non-convex in $B$. In practice this manifests as
slower convergence and higher sensitivity to initialization and
$\varepsilon$ than the MSE case. 

\section{Efficient PCA recovery by FP-MRL}
\label{app:pca-proof}

This section gives the formal statement and proof of the PCA recovery
result sketched in \cref{sec:structure}, together with the computational
complexity analysis.

\begin{theorem}[Full-prefix linear MRL and PCA recovery]
  \label{thm:pca}
  Let $\Sigma = XX^\top$ have distinct top-$d$ eigenvalues
  $\sigma_1^2 > \cdots > \sigma_d^2 > 0$ with eigenvectors
  $u_1,\ldots,u_d$.
  \begin{enumerate}
  \item \emph{(Subspace recovery)}: every global optimum $(A^*, B^*)$ of
$\LossFPMRL$ satisfies
$\mathrm{colspan}(A^*_{1:m}) = \mathrm{span}(u_1, \ldots, u_m)$
for all $m \in [d]$, with each column $a_k^*$ aligned with $\pm u_k$.
The full-prefix structure pins the nested flag of subspaces exactly,
leaving only the discrete sign-flip symmetry $\{\pm 1\}^d$.
    \item \emph{(Full PCA recovery)}: under $A^\top A = I_d$, the
          unique global optimum is $A^* = [u_1,\ldots,u_d]$.
  \end{enumerate}
\end{theorem}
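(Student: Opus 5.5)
The plan is to decouple the full-prefix objective into its $d$ prefix terms, bound each term separately from below, and show that one configuration attains all $d$ bounds at once. Write $\LossFPMRL = \sum_{m=1}^d \omega_m \|X - A_{1:m}B_{1:m}X\|_F^2$, where $B_{1:m}$ denotes the first $m$ rows of $B$ and $\omega_m>0$. For each $m$, the matrix $A_{1:m}B_{1:m}$ has rank at most $m$. The Eckart--Young--Mirsky theorem then gives
\[
\|X - A_{1:m}B_{1:m}X\|_F^2 \;\ge\; \sum_{k>m}\sigma_k^2 .
\]
Summing with weights yields a global lower bound on $\LossFPMRL$. The PCA configuration $A=[u_1,\ldots,u_d]$, $B=A^\top$ attains every prefix bound simultaneously, because its prefix-$m$ map is exactly $U_mU_m^\top$ with $U_m=[u_1,\ldots,u_m]$. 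Hence the global minimum value equals the summed bound (for $\omega_m=1$ this is the value quoted in \cref{sec:structure}). Since every term is individually bounded below, any global optimum must be optimal for each prefix separately. This reduction is the key step: the joint problem becomes $d$ independent Eckart--Young problems that happen to share parameters.

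Next comes uniqueness of each prefix optimum. Since $\sigma_m^2>\sigma_{m+1}^2$ for all $m\le d$, the best rank-$m$ approximation of $X$ is unique and equals $U_mU_m^\top X$. At the top scale this requires the gap $\sigma_d^2>\sigma_{d+1}^2$; I would state this explicitly as an assumption, since it is implicit in ``distinct top-$d$ eigenvalues''. Uniqueness gives $A_{1:m}B_{1:m}X = U_mU_m^\top X$ for every $m$, and in particular $\mathrm{colspan}(A^*_{1:m})=\mathrm{span}(u_1,\ldots,u_m)$, because the image has rank exactly $m$. This is the subspace-recovery claim.

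The per-column claim then follows by telescoping consecutive prefixes:
\[
a_k b_k^\top X = (A_{1:k}B_{1:k} - A_{1:k-1}B_{1:k-1})X = (U_kU_k^\top - U_{k-1}U_{k-1}^\top)X = u_ku_k^\top X .
\]
Because $u_k^\top X\neq 0$ (as $\sigma_k>0$), the rank-one matrix $a_kb_k^\top X$ has column space $\mathrm{span}(u_k)$. Therefore $a_k = c_k u_k$ for some $c_k\neq 0$, with $b_k^\top X = c_k^{-1}u_k^\top X$.

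The step I expect to be the main obstacle is pinning down exactly what symmetry remains, and I would be careful to state the result honestly. Without any constraint, the per-coordinate rescaling $(a_k,b_k)\mapsto(c_ka_k,\,c_k^{-1}b_k)$ leaves every prefix product unchanged. Consequently, part 1 recovers each column only up to a nonzero scalar, i.e.\ the direction $\pm u_k$. The reduction to the discrete group $\{\pm1\}^d$ holds only after fixing scale, for example by normalizing $\|a_k\|=1$. Similarly, $B$ is determined only on the range of $X$ unless $X$ has full row rank.

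For part 2, the constraint $A^\top A=I_d$ forces $\|a_k\|=1$, so $c_k=\pm1$ and $A^*=[\pm u_1,\ldots,\pm u_d]$. Restricted to the range of $X$, the encoder is then $B^*=A^{*\top}$. Uniqueness of $A^*=[u_1,\ldots,u_d]$ therefore holds modulo the sign convention on eigenvectors, which I would fix by choosing representatives of each $u_k$. Any residual rotation inside a prefix block is excluded because the telescoped rank-one terms are uniquely determined, which relies on the distinct eigenvalues.
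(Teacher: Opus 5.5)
Your proposal is correct. It shares the paper's skeleton: a per-prefix Eckart--Young--Mirsky bound, summed over prefixes, together with the observation that the PCA configuration attains every bound at once, so every global optimum is optimal prefix by prefix.

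You diverge from the paper in what you extract from per-prefix optimality. The paper characterizes equality by $\mathrm{colspan}(A_{1:m}) = \mathrm{span}(u_1,\ldots,u_m)$ and then asserts a residual $\mathrm{SO}(m)$ gauge within each prefix. You instead use uniqueness of the truncated SVD to pin the product $A_{1:m}B_{1:m}X = U_mU_m^\top X$, and then telescope consecutive prefixes to get $a_kb_k^\top X = u_ku_k^\top X$. This gains something real. The span condition is necessary but not sufficient for equality (take $B=0$). The flag of column spans alone only gives $A = U_dR$ with $R$ upper triangular. So the paper's argument does not actually establish the per-column alignment claimed in Part~1, and its $\mathrm{SO}(m)$ remark sits uneasily with that claim.

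Your accounting of the residual symmetry is also more accurate than the statement itself:
\begin{itemize}
\item Without normalization, the optimal set is invariant under the continuous rescaling $(a_k,b_k)\mapsto(c_ka_k,c_k^{-1}b_k)$. So $\{\pm1\}^d$ is the residual group only after fixing $\|a_k\|=1$.
\item The encoder is pinned down only on $\mathrm{range}(X)$.
\item Uniqueness in Part~2 holds only up to signs.
\item The gap $\sigma_d^2>\sigma_{d+1}^2$ is needed at the top prefix.
\end{itemize}

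The paper's route is shorter. For Part~2 its span-chain idea does suffice once $A^\top A=I_d$ is imposed, because an orthogonal upper-triangular $R$ must be diagonal with entries $\pm1$. So the telescoping is not strictly needed there.

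One point to state explicitly in your Part~2: constrained optima are exactly the unconstrained optima that satisfy $A^\top A = I_d$. This holds because the PCA configuration is feasible and attains the unconstrained bound, and it is what lets you reuse the Part~1 characterization.
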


\begin{proof}
\textbf{Part (1).}
By the Eckart--Young--Mirsky theorem
\citep{eckart1936approximation,mirsky1960symmetric}, for each $m$:
\[
  \|X - A_{1:m}B_{1:m}X\|_F^2 \;\geq\;
  \sum_{k=m+1}^{p}\sigma_k^2,
\]
with equality iff
$\mathrm{colspan}(A_{1:m}) = \mathrm{span}(u_1,\ldots,u_m)$.
Summing over $m = 1,\ldots,d$, each eigenvalue $\sigma_k^2$ for
$k \leq d$ appears in the bound of every prefix $m < k$, contributing
coefficient $(k-1)$:
\[
  \LossFPMRL(A,B) \;\geq\;
  \sum_{k=2}^{d}(k-1)\sigma_k^2 + d\sum_{k=d+1}^{p}\sigma_k^2.
\]
Equality holds simultaneously at every $m$ iff
$\mathrm{colspan}(A_{1:m}) = \mathrm{span}(u_1,\ldots,u_m)$ for all
$m \in [d]$, which is achieved by the unique nested chain
$\mathrm{span}(u_1) \subset \cdots \subset
\mathrm{span}(u_1,\ldots,u_d)$.
Within each prefix, $A_{1:m}^*$ is free up to $T \in \mathrm{SO}(m)$
since the MSE loss depends only on the column space
\citep{baldi1989neural}.

\textbf{Part (2).}
Under $A^\top A = I_d$, distinct eigenvalues yield a unique orthonormal
minimizer at each scale, collapsing the $\mathrm{SO}(m)$ gauge to the
identity and hence $A^*_{1:m} = [u_1,\ldots,u_m]$ for all $m$
\citep{rippel2014learning}.
\end{proof}

\textbf{Computation complexity. }Naively, full-prefix MRL requires evaluating $d$ reconstruction losses,
one per prefix width, suggesting a $d$-fold cost over standard training.
In the linear setting this overhead is avoidable: every prefix loss is
determined by the same three $d \times d$ summaries of the data, encoder,
and decoder, computed once per forward pass.
The result is that full-prefix MRL has the same $O(npd)$ asymptotic cost
as a single standard reconstruction pass, with only $O(d^2)$ additive
overhead for prefix accumulation.
\begin{proof}
Let $Z = BX \in \RR^{d \times n}$. Write $a_i \in \RR^p$ for the $i$-th
column of $A$ and $z_i^\top \in \RR^{1 \times n}$ for the $i$-th row of
$Z$ (so $z_i \in \RR^n$). The prefix-$m$ reconstruction decomposes as
\[
  \hat{X}^{(m)} = A_{:,1:m}\, Z_{1:m,:} = \sum_{i=1}^{m} a_i z_i^\top.
\]. 
Expanding the Frobenius norm and substituting,
\[
  \ell_m := \|X - \hat{X}^{(m)}\|_F^2
         = \|X\|_F^2
           - 2\sum_{i=1}^m H_{ii}
           + \sum_{i,j=1}^m G^A_{ij}\, G^Z_{ij},
\]
where $Z = BX$, $H = A^\top X Z^\top$, $G^A = A^\top A$, and $G^Z = Z Z^\top$ are
$d \times d$ matrices, with total cost $O(npd)$ to form. They depend
on $m$ only through prefix truncation.
Forming $Z = BX$ and $X Z^\top$ costs $O(npd)$ in total. The Gram
matrices $G^A$ and $G^Z$ contribute $O(pd^2)$ and $O(nd^2)$, which are
absorbed when $n, p \gg d$. Once these are in hand, evaluating $\ell_m$ for all $m \in [d]$ then reduces to incremental
sums over $H$, $G^A$, and $G^Z$, taking $O(d^2)$ more work — total
cost $O(npd)$. Total cost: $O(npd + d^2) = O(npd)$.
\end{proof}


\section{Experimental Setup}
\label{app:exp_setup}

\begin{table*}[t]
  \centering
  \small
  \caption{\textbf{PCA/LDA recovery experiments: architecture and training parameters per dataset and loss
    family.} $p$: input dimensionality; $C$: number of classes;
    $n_{\mathrm{lda}}=C-1$: rank of the LDA subspace; $d$: latent
    dimensionality; $\cM$: S-MRL prefix set; $\sigma_d^2$: the
    $d$-th PCA eigenvalue of the training data.
    Optimiser is Adam with learning rate $10^{-3}$ and weight decay
    $10^{-4}$ in every row. }
  \label{tab:arch_dims}
  \setlength{\tabcolsep}{4pt}
  \begin{tabular}{llcccccccl}
    \toprule
    Dataset & Family & $p$ & $C$ & $n_{\mathrm{lda}}$ & $d$ & $\cM \text{ (S-MRL)}$
      & Batch &  Regulariser \\
    \midrule
    Synthetic data
      & MSE    & 69  & 20 & 19 & 50 & $\{5,10,25,50\}$ & 256
          & 
          $\alpha{=}0.01,\  \lambda_k\!\in\![\tfrac{\sigma_d^2}{18},\tfrac{\sigma_d^2}{2};\Delta\lambda{=}\tfrac{\lambda_{\max}-\lambda_{\min}}{(d-1)}]$ \\
    Synthetic data
      & Fisher & 69  & 20 & 19 & 19 & $\{5,10,15,19\}$ & full
         & $\varepsilon{=}10^{-4}$ \\
    Fashion-MNIST
      & MSE    & 784 & 10 & 9  & 32 & $\{4,8,16,32\}$  & 256
       & $\alpha{=}0.01,\  \lambda_k\!\in\![\tfrac{\sigma_d^2}{18},\tfrac{\sigma_d^2}{2};\Delta\lambda{=}\tfrac{\lambda_{\max}-\lambda_{\min}}{(d-1)}]$ \\
    Fashion-MNIST
      & Fisher & 784 & 10 & 9  & 9  & $\{2,4,7,9\}$    & full
       & $\varepsilon{=}10^{-4}$ \\
    \bottomrule
  \end{tabular}
  \\[3pt]
  \raggedright
\end{table*}

 \subsection{Model architecture and training dimensions}
\label{app:arch_dims}

\textbf{PCA/LDA recovery.} All models share an identical linear
autoencoder backbone for PCA/LDA recovery experiments.
\begin{equation}
  z = B x, \qquad \hat{x} = A z,
  \qquad
  B \in \RR^{d\times p},\ A \in \RR^{p\times d},
\end{equation}
where $x \in \RR^{p}$, $z \in \RR^{d}$, $B$ is the encoder and $A$
is the decoder. Both maps are bias-free and contain no
nonlinearity, so the end-to-end map $\hat{x}=ABx$ is strictly linear.
The values of the input dimension $p$ of data $x$, the number of
classes $C$, the latent dimension $d$, the S-MRL prefix set $\cM$,
and the optimiser/regulariser settings differ by dataset and loss
family; they are given row-by-row in Table~\ref{tab:arch_dims}. The decoder is
initialised with orthonormal columns ($A^\top A = I_d$) via reduced
QR decomposition.  When the decoder-orthogonality constraint is active
(\textsc{FP-MRL}), a QR re-projection of $A$ is applied after every
optimiser step.  Fisher-family losses use the same backbone but
ignore $A$ during training; labels enter only through the between-
and within-class scatter matrices $\SBC$ and $\SWC$ computed on
$z$, whose rank is bounded by $n_{\mathrm{lda}}=C-1$ where $C$ is
the number of classes.  

We use two datasets in the configurations of
Table~\ref{tab:arch_dims}.
Synthetic dataset: $n_{\mathrm{tot}}{=}10{,}000$
samples (500 per class), split stratified $70/10/20$ into $7{,}000$
train, $1{,}000$ val, and $2{,}000$ test. Details of synthetic data are given in appendix~\cref{app:synthetic}.
Fashion-MNIST dataset: the full
$n_{\mathrm{tot}}{=}70{,}000$ images, split stratified $70/10/20$
into $49{,}000$ train, $7{,}000$ val, and $14{,}000$ test. In both
cases the inputs are mean-centred on the training split (no variance
scaling, so the noise/signal eigenvalue contrast in the synthetic
data is preserved).

\textbf{Classification.} The classification experiment replaces the
linear autoencoder with a nonlinear encoder followed by a linear
classifier, trained end-to-end on MNIST
($p=784$, $C=10$) with a bottleneck width of $d=16$.  The
encoder
\begin{equation}
  z \;=\; \frac{f_\theta(x)}{\lVert f_\theta(x) \rVert_2},
  \qquad
  f_\theta:\RR^{p}\!\to\!\RR^{d},
\end{equation}
is a two-hidden-layer MLP
($p \!\to\! 256 \!\to\! 256 \!\to\! d$) with batch normalisation,
ReLU, and dropout ($0.1$) between layers, and an $\ell_2$-normalised
output.  A single linear classifier $W\!\in\!\RR^{C\times d}$ maps
$z$ to class logits.  We train two loss families on this shared
backbone: FP-MRL, and (MD-$\ell_1$). All models are optimised with Adam (learning rate $10^{-3}$,
weight decay $10^{-4}$) at batch size $128$ for up to $20$ epochs
with early stopping (patience $5$) on validation loss.  The best
checkpoint by validation loss is retained for evaluation. Unlike the linear
autoencoder setting, the nonlinear encoder breaks the shared
sufficient-statistics structure that made FP-MRL asymptotically free:
each of the $d$ prefix losses now requires a separate forward pass through
the classifier head, so FP-MRL incurs an $O(d)$ overhead per step relative
to MD-$\ell_1$.

\textbf{Prefix-wise evaluation.}  We report two test-set metrics
for every trained model.
\begin{itemize}
\item \emph{1-D probing accuracy.}  For each prefix
      $k=1,\dots,d$ we fit an independent $1$-D logistic regression
      on $z_{\text{train}}[:,k]$ (restricted to a $2{,}000$-sample
      subset for tractability) and score it on
      $z_{\text{test}}[:,k]$.  The resulting per-dimension accuracy
      profile quantifies how much class-discriminative information
      each coordinate carries in isolation, independently of any
      interaction with other dimensions.
\item \emph{Linear accuracy.}  For each prefix length
      $k=1,\dots,d$ we re-use the learned classifier:
      \begin{equation}
        \mathrm{logits}_k
        \;=\;
        z_{\text{test}}[:,1{:}k]\,W_{:,1{:}k}^{\top} \;+\; b,
      \end{equation}
      and report top-$1$ accuracy against the test labels.  This metric
      measures how well the trained classifier itself
      respects the prefix structure — i.e., whether the leading
      coordinates alone are sufficient for $W$ to discriminate the
      classes.
\end{itemize}

\begin{figure*}[!b]
    \centering
    \myfigure[width=\textwidth]{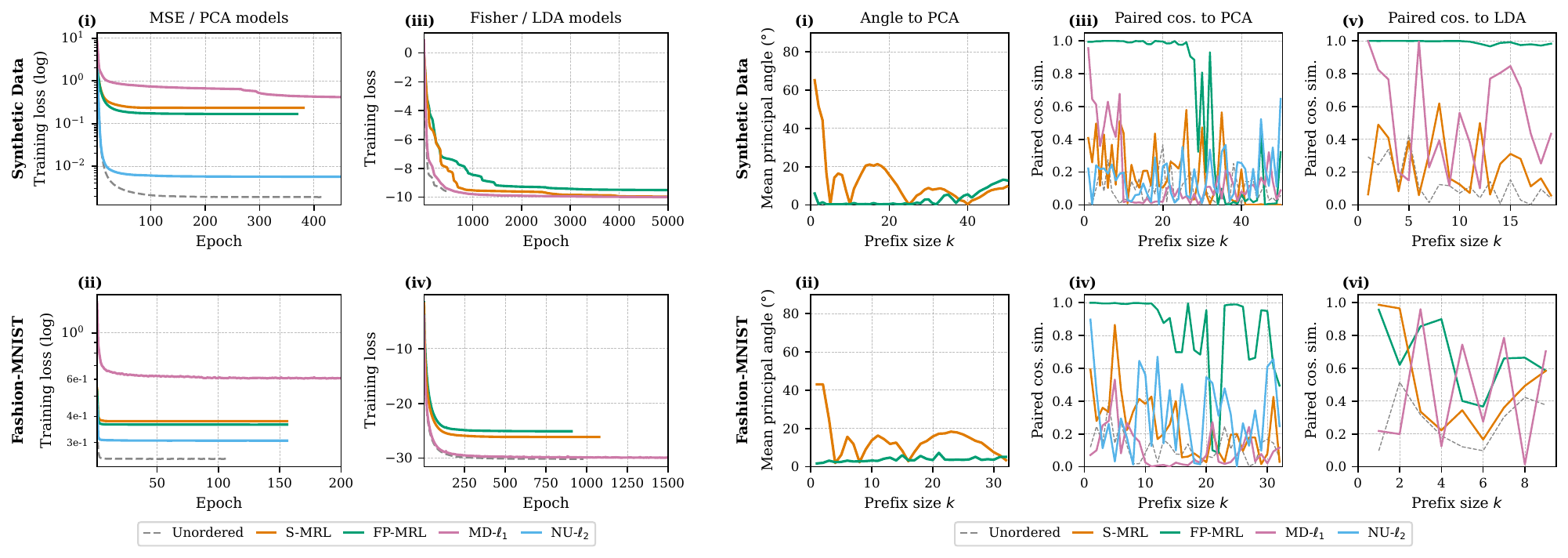}
    \begin{minipage}[t]{0.49\textwidth}
        \captionof{figure}{\textbf{Fisher/LDA optimization is harder to reach the global
optimum under prefix-based losses.} Training loss curves for MSE/PCA
models (i,~ii) and Fisher/LDA models (iii,~iv), on synthetic data (top)
and Fashion-MNIST (bottom). All methods reach the global optimum on the
LAE objective for both datasets, and on the LDA objective for synthetic
data. On Fashion-MNIST LDA, however, S-MRL and FP-MRL plateau above the
optimum reached by MD-$\ell_1$, NU-$\ell_2$, and the unordered
baseline, indicating that with the simple linear model architecture described in~\cref{app:arch_dims}, prefix-based losses struggle to
optimize the Fisher objective.}
        \label{fig:losses}
    \end{minipage}%
    \hfill
    \begin{minipage}[t]{0.49\textwidth}
        \captionof{figure}{\textbf{Basis alignment across symmetry-breaking mechanisms.}
        Mean principal angle between the leading $k$ learned prefix
        directions and the leading $k$ PCA eigenvectors (i,~ii), and
        paired cosine similarity between the $k$-th learned prefix
        direction and the $k$-th PCA (iii,~iv) or LDA (v,~vi)
        eigenvector, on synthetic data (top row) and Fashion-MNIST
        (bottom row). In (i,~ii), FP-MRL tracks the ordered PCA basis
        closely across all prefix sizes, while S-MRL aligns only at
        the discrete prefix sizes evaluated during training (see Table~\ref{tab:arch_dims}) and departs from the
        ordered basis at intermediate $k$. Across rest of the panels, MD-$\ell_1$,
        NU-$\ell_2$, and the unordered baseline recover the subspace
        but not the coordinate-wise ordering.}
        \label{fig:pairedCos}
    \end{minipage}
\end{figure*}

\subsection{Synthetic data generation}
\label{app:synthetic}

The synthetic dataset is designed so that the top PCA and LDA
subspaces are orthogonal by construction, providing a controlled setting
in which to test whether each loss recovers its objective-specific
subspace.  Each input $x \in \RR^p$ is partitioned into two disjoint
blocks,
\begin{equation}
  x \;=\;
  \bigl[\,\underbrace{x_{\text{noise}}}_{p_{\text{noise}}}\;\big|\;
          \underbrace{x_{\text{sig}}}_{p_{\text{sig}}}\,\bigr],
  \qquad
  p \;=\; p_{\text{noise}} + p_{\text{sig}},
\end{equation}
drawn independently per sample from
\begin{equation}
  x_{\text{noise}} \sim \mathcal{N}(0, \Sigma_{\text{noise}}),
  \qquad
  x_{\text{sig}} \mid c \sim \mathcal{N}(\mu_c, \tau_{\text{sig}}^2 I),
\end{equation}
where $c \in \{1, \ldots, C\}$ is the class label.
The noise block carries no class information and has large variance, so
PCA eigenvectors concentrate on it.  The signal block has small
within-class variance but class-dependent means, so LDA directions
concentrate on it.  Setting $p_{\text{sig}} = C-1$ ensures that
between-class scatter is full rank and that the signal block is exactly
spanned by the $C-1$ LDA discriminants.

We use $C = 20$ classes, $p_{\text{noise}} = 50$,
$p_{\text{sig}} = C - 1 = 19$ (so $p = 69$),
$\tau_{\text{noise}} = 5.0$, $\tau_{\text{sig}} = 0.1$, and base
class-mean scale $\beta_0 = 1.0$.  We draw $500$ samples per class
(total $n = 10{,}000$) and split $70/10/20$ into train/val/test
(stratified).  Features are mean-centred on the train split only; we
deliberately do not standardise variance, which would destroy the
variance contrast between the two blocks.

\paragraph{Noise block.}
$\Sigma_{\text{noise}} \in \RR^{p_{\text{noise}} \times p_{\text{noise}}}$
is diagonal with a geometrically decaying spectrum,
\begin{equation}
  (\Sigma_{\text{noise}})_{ii}
  \;=\; \tau_{\text{noise}}^2 \, \rho_{\text{noise}}^{\,2i},
  \qquad i = 0, \ldots, p_{\text{noise}} - 1,
\end{equation}
with $\rho_{\text{noise}} = 0.9$, so consecutive noise eigenvalues are
well-separated and individual PCA eigenvectors (not just the subspace)
are identifiable.

\paragraph{Signal block.}
Class means in the signal block are constructed to give a ground-truth
LDA ordering.  Let $H \in \RR^{C \times p_{\text{sig}}}$ have orthonormal
columns, each orthogonal to $\mathbf{1}_C$ (so the grand mean is zero on
every signal dimension), obtained by reduced QR of a Gaussian matrix
projected onto $\mathbf{1}_C^\perp$.  Let $\bar{\mu} \in \RR^{C \times p_{\text{sig}}}$
be the matrix whose $c$-th row is the class-$c$ signal-block mean $\mu_c$.
We set
\begin{equation}
  \bar{\mu} \;=\; H \, \mathrm{diag}(\beta),
  \qquad
  \beta_i \;=\; \beta_0 \, \rho_{\text{sig}}^{\,i},
\end{equation}
with $\beta_0 = 1.0$ and $\rho_{\text{sig}} = 0.7$.  The between-class
scatter in signal space is then
\[
  \SBC^{(\text{sig})} \;=\; \tfrac{1}{C}\,
  \mathrm{diag}\bigl(\beta_0^2, \ldots, \beta_{p_{\text{sig}}-1}^2\bigr),
\]
so the $i$-th LDA direction aligns with signal axis $i$ and class
separability decreases monotonically with $i$.

\subsection{Evaluation metrics}
\label{app:eval_metrics}

Let $\{q_1,\dots,q_d\}$ denote the learned prefix directions (unit-norm
columns of the decoder $A$, or rows of the encoder $B$ in the Fisher
case), and let $\{e_1,\dots,e_d\}$ denote the reference basis (PCA
eigenvectors $u_k$ of $\Sigma$, or LDA discriminants $v_k$), also
unit-norm and ordered by descending eigenvalue.  We use three metrics
to compare the learned and reference bases.

\paragraph{Paired cosine similarity.}
Measures whether the $k$-th learned direction matches the $k$-th
reference direction, capturing coordinate-wise ordering:
\begin{equation}
  \mathrm{PairedCos}(k) \;=\; \bigl|\langle q_k,\, e_k\rangle\bigr|,
  \qquad k \in [d].
\end{equation}

\paragraph{Max cosine similarity.}
Measures whether the $k$-th learned direction lies along \emph{any}
reference direction, capturing subspace membership without requiring
ordering:
\begin{equation}
  \mathrm{MaxCos}(k) \;=\; \max_{j \in [d]}
  \bigl|\langle q_k,\, e_j\rangle\bigr|.
\end{equation}

\paragraph{Mean principal angle.}
Measures alignment between the leading-$k$ learned subspace and the
leading-$k$ reference subspace, independent of any per-axis
correspondence.  Let $Q_k = [q_1,\dots,q_k]$ and $E_k = [e_1,\dots,e_k]$
(with orthonormalised columns), and let
$\eta_1 \geq \dots \geq \eta_k$ be the singular values of
$Q_k^\top E_k$.  The principal angles are
$\varphi_i = \arccos(\eta_i)$, and we report
\begin{equation}
  \overline{\varphi}(k) \;=\; \frac{1}{k}\sum_{i=1}^{k}\varphi_i.
\end{equation}
A value of $0^\circ$ indicates the two leading-$k$ subspaces coincide;
$90^\circ$ indicates they are orthogonal.

\begin{figure*}[t]
    \centering
    \begin{subfigure}{\textwidth}
        \centering
        \myfigure[width=0.85\textwidth]{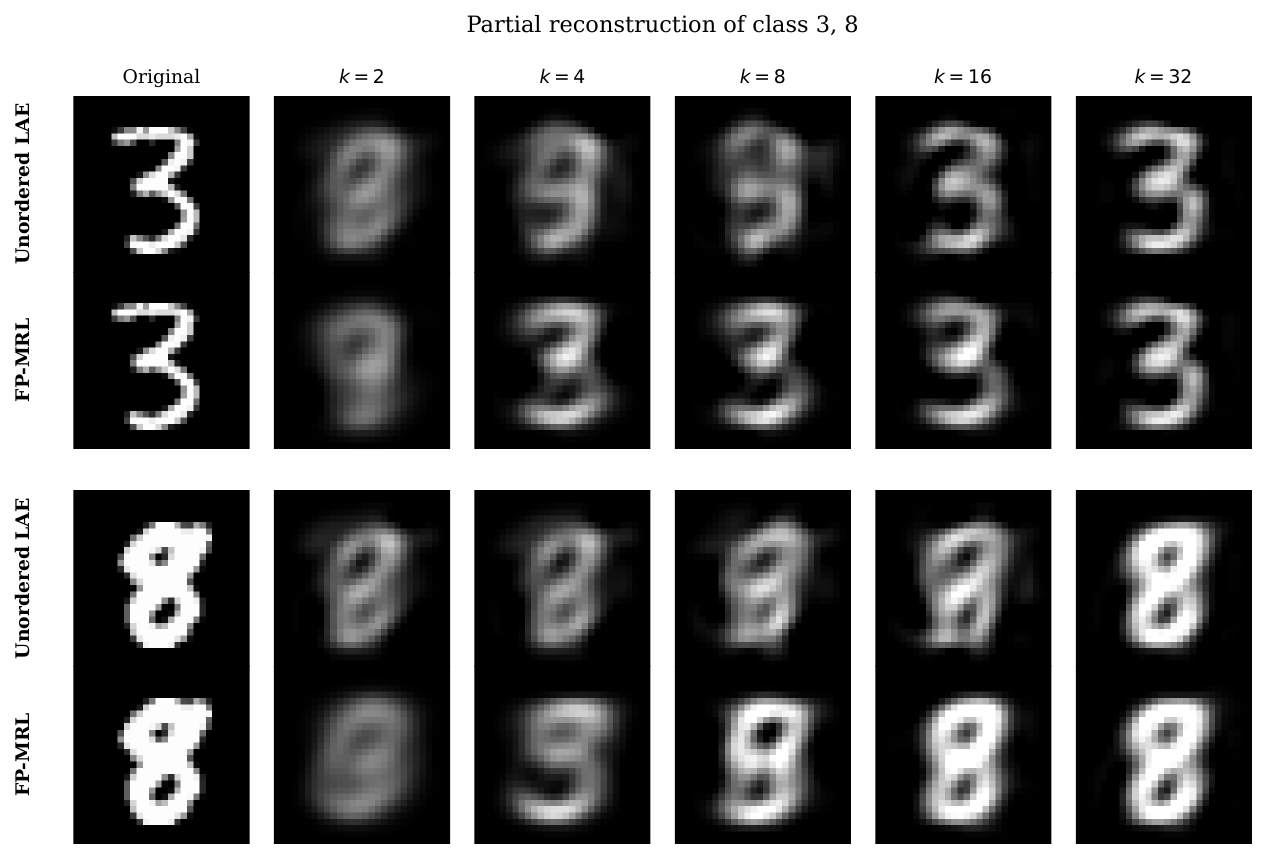}
        \caption{Reconstruction grid.}
        \label{fig:reconstruction}
    \end{subfigure}
    
    \vspace{0.5em}
    
    \begin{subfigure}{\textwidth}
        \centering
        \myfigure[width=0.85\textwidth]{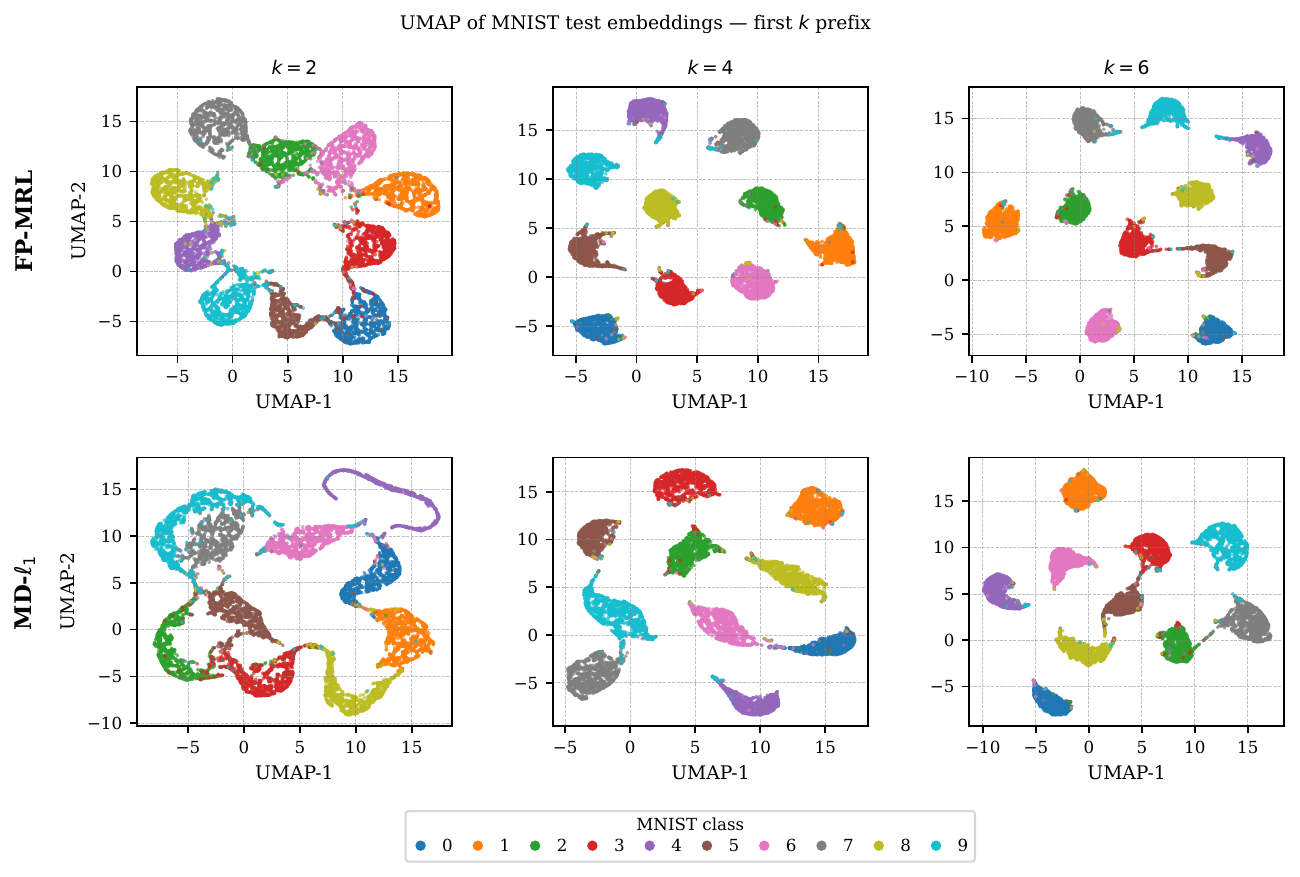}
        \caption{UMAP clustering.}
        \label{fig:cluster}
    \end{subfigure}
    
    \caption{
    \textbf{Prefix-truncated representation quality on MNIST.}
    \textbf{(a)} Partial reconstructions of digits 3 and 8 using only the
    first $k\!\in\!\{2,4,8,16,32\}$ latent coordinates. The unordered LAE
    baseline produces incoherent outputs at small $k$, while \textsc{FP-MRL}
    yields recognizable digits even from $k\!=\!2$, with quality improving
    monotonically as more coordinates are added.
    \textbf{(b)} UMAP projections of the first-$k$ prefix of the test-set
    embeddings for $k\!\in\!\{2,4,6\}$. \textsc{FP-MRL} (top) produces
    visibly more compact and class-separated clusters at every prefix
    length than the \textsc{MD-}$\ell_1$ baseline (bottom).
    }
    \label{fig:recon_and_cluster}
\end{figure*}


\begin{table*}[t]
  \centering
  \small
  \caption{Summary of notation used throughout the paper.}
  \label{tab:notation}
  \setlength{\tabcolsep}{4pt}
  \begin{tabular}{@{}ll@{\hspace{2em}}ll@{}}
    \toprule
    Symbol & Meaning & Symbol & Meaning \\
    \midrule
    \multicolumn{4}{@{}l}{\textit{Data and dimensions}} \\
    $p$ & Input dimensionality                       & $C$ & Number of classes \\
    $d$ & Embedding dimensionality                   & $L$ & Output dimensionality of task head \\
    $n$ & Number of training samples                 & $\Sigma = XX^\top$ & Empirical data covariance \\
    $X \in \RR^{p\times n}$ & Zero-centered data matrix & $\sigma_k^2,\, u_k$ & $k$-th PCA eigenvalue / eigenvector of $\Sigma$ \\
    $x \in \RR^p$ & Single input sample              & $\tilde{y}_s$ & Target / label for sample $s$ \\
    $s \in [n]$ & Sample index                       & $k \in [d]$ & Latent dimension index \\
    $i, j \in [d]$ & Dimension indices for matrix entries & & \\
    \midrule
    \multicolumn{4}{@{}l}{\textit{Model parameters}} \\
    $B \in \RR^{d\times p}$ & Linear encoder         & $a_k \in \RR^p$ & $k$-th column of decoder $A$ \\
    $A \in \RR^{p\times d}$ & Linear decoder; cols.\ $a_1,\ldots,a_d$ & $W^{(m)} \in \RR^{L\times m}$ & Linear task head at prefix scale $m$ \\
    $b_k \in \RR^p$ & $k$-th row of encoder $B$      &  &  \\
    \midrule
    \multicolumn{4}{@{}l}{\textit{Latent and reconstruction}} \\
    $z = Bx \in \RR^d$ & Embedding                 & $y_k = z_k a_k \in \RR^p$ & Rank-1 contribution of dim.\ $k$ \\
    $z_k = b_k^\top x$ & $k$-th prefix of the embedding    & $\hat{x}^{(m)} = \sum_{k=1}^m y_k$ & Prefix-$m$ reconstruction \\
    $z_{1:m}$ & First $m$ prefix of $z$         & $r^{(m)} = x - \hat{x}^{(m)}$ & Prefix-$m$ residual \\
    \midrule
    {\textit{Prefix structure}} \\
$\cM \subseteq [d]$ & Nesting set of prefix sizes
  & $\omega_m > 0$ & Per-prefix weight (we use $\omega_m{=}1$) \\
$m \in \cM$ & A prefix size
  & $w_k = \sum_{m=k}^d \omega_m$ & Cumulative weight on dimension $k$ \\
$Z = BX \in \RR^{d\times n}$ & Embedding matrix (all samples)
  & $\hat{X}^{(m)} = A_{:,1:m} Z_{1:m,:}$ & Prefix-$m$ reconstruction (matrix form) \\
$\ell_m = \|X - \hat{X}^{(m)}\|_F^2$ & Prefix-$m$ squared loss
  & $S_d^\omega$ & Prefix coupling matrix; $(S_d^\omega)_{ij}{=}w_{\max(i,j)}$ \\
 &  & $H,\, G^A,\, G^Z$ & Sufficient $d{\times}d$ statistics (App.~\ref{app:pca-proof}) \\
\midrule
  \multicolumn{4}{@{}l}{\textit{Symmetry and identifiability}} \\
    $\GL(d)$ & \multicolumn{3}{l}{Invertible reparameterizations of latent space (broader than $\mathrm{SO}(d)$)} \\
    $\mathrm{SO}(d)$ & Rotations: $T^\top T{=}I,\ \det T{=}+1$       & $\cG_\cL$ & Subgroup of $\mathrm{SO}(d)$ leaving $\cL$ invariant at $\theta^*$ \\
    $T \cdot (A, B)$ & Latent reparameterization $(AT,\, T^{-1}B)$   & $\cG_\cL = \{I\}$ & Full identifiability \\
    \midrule
    \multicolumn{4}{@{}l}{\textit{Losses and regularizers}} \\
    $\cL$ & Generic differentiable training loss        & 
    $\LossFisher$ & Fisher discriminant objective \\
    $\theta$ & All trainable parameters                  & $\Lambda = \mathrm{diag}(\lambda_1,\ldots,\lambda_d)$ & NU-$\ell_2$ coefficient matrix \\
    $\theta^*$ & Optimum (minimizer) of $\cL$            & $\alpha$ & MD-$\ell_1$ regularization strength \\
    $\LossAE$ & Standard LAE (full-width MSE) loss & $\varepsilon$ & Fisher numerical stabilizer \\
    $\LossMRL$ & Matryoshka loss (general $\cM$)   & $\LossNUL$ & Non-uniform $\ell_2$ loss (NU-$\ell_2$) \\
    $\LossFPMRL$ & Full-prefix MRL loss ($\cM = [d]$) & $\LossMDL$ & Monotone-decay $\ell_1$ loss (MD-$\ell_1$) \\
    \midrule
    \multicolumn{4}{@{}l}{\textit{Fisher / LDA}} \\
    $\mu_c,\, \mu$ & Class-$c$ mean / global mean    & $\SBC^{(z)},\, \SWC^{(z)}$ & Between- / within-class scatter (embedding) \\
    $\SBC^{(x)},\, \SWC^{(x)}$ & Between- / within-class scatter (input) & $v_k,\, \gamma_k$ & $k$-th LDA direction / Fisher ratio \\
    & & $n_{\mathrm{lda}} = C{-}1$ & Rank of LDA subspace \\
    \midrule
    \multicolumn{4}{@{}l}{\textit{Methods (abbreviations)}} \\
    LAE   & Linear autoencoder                       & FP-MRL      & Full-prefix MRL ($\cM = [d]$) \\
    MRL   & Matryoshka Representation Learning       & NU-$\ell_2$ & Non-uniform $\ell_2$ regularization \\
    S-MRL & Sparse MRL ($O(\log d)$ prefixes)        & MD-$\ell_1$ & Monotone-decay $\ell_1$ regularization \\
          &                                          & PCA / LDA   & Principal / Linear Discriminant Analysis \\
    \midrule
    \multicolumn{4}{@{}l}{\textit{Evaluation metrics}} \\
    $q_k \in \RR^p$ & $k$-th learned prefix direction (unit norm)
      & $Q_k = [q_1,\dots,q_k]$ & Leading-$k$ learned basis matrix \\
    $e_k \in \RR^p$ & $k$-th reference basis direction (PCA $u_k$ / LDA $v_k$)
      & $E_k = [e_1,\dots,e_k]$ & Leading-$k$ reference basis matrix \\
    $\mathrm{PairedCos}(k)$ & Paired cosine sim.\ $|\langle q_k, e_k\rangle|$
      & $\eta_i$ & $i$-th singular value of $Q_k^\top E_k$ \\
    $\mathrm{MaxCos}(k)$ & Max cosine sim.\ $\max_j |\langle q_k, e_j\rangle|$
      & $\varphi_i = \arccos(\eta_i)$ & $i$-th principal angle \\
    & & $\overline{\varphi}(k)$ & Mean principal angle over leading-$k$ subspaces \\
    \bottomrule
  \end{tabular}
\end{table*}

\end{document}